\documentclass[11pt]{article}

\usepackage[final]{acl}

\usepackage[T1]{fontenc}
\usepackage[utf8]{inputenc}
\usepackage{times}
\usepackage{latexsym}
\usepackage{microtype}
\usepackage{inconsolata}
\usepackage{xspace}

\usepackage{amsmath}
\usepackage{mathtools}
\usepackage{amssymb}
\usepackage{amsfonts}
\usepackage{amsthm}
\usepackage{siunitx}

\usepackage{booktabs}
\usepackage{tabularx}
\usepackage{array}
\usepackage{multirow}
\usepackage{arydshln}

\usepackage{subcaption}
\usepackage{graphicx}
\usepackage{graphicx}
\usepackage[table]{xcolor}
\usepackage{tcolorbox}

\usepackage{enumitem}
\usepackage{algorithm}
\usepackage{algorithmic}

\usepackage{natbib}
\usepackage{caption}
\usepackage{etoc}

\newtheorem{theorem}{Theorem}

\definecolor{mygray}{gray}{0.95}

\newcommand{\method}{BV-Blend\xspace}

\title{BV-Blend: Uncertainty-Weighted Historical Baselines for Stable Critic-Free RL with Verifiable Rewards}

\author{
    Yupeng Chang$^{1}$ \quad Yuan Wu$^{1}$\footnotemark[1] \quad Yi Chang$^{1,2,3}$ \\
    $^{1}$School of Artificial Intelligence, Jilin University \\
    $^{2}$Engineering Research Center of Knowledge-Driven Human-Machine Intelligence, MOE, China \\
    $^{3}$International Center of Future Science, Jilin University \\
    changyp23@mails.jlu.edu.cn, \{yuanwu, yichang\}@jlu.edu.cn
}

\begin{document}
\etocdepthtag.toc{chapter}
\etocsettagdepth{chapter}{none}
\etocsettagdepth{appendix}{none}
\maketitle

\renewcommand{\thefootnote}{\fnsymbol{footnote}}
\footnotetext[1]{Corresponding author}

\begin{abstract}
Critic-free reinforcement learning with verifiable rewards (RLVR), exemplified by Group Relative Policy Optimization (GRPO), avoids training a value function (critic) and reduces memory and compute overhead relative to critic-based PPO pipelines for aligning large language models.
However, GRPO-style advantage estimation depends on prompt-local (within-prompt-group) reward statistics and can be unstable.
In particular, when all rollouts in a prompt group receive identical rewards, the within-group reward variance becomes zero, and group normalization yields \emph{zero} advantages for that group, impeding learning in cold-start regimes with binary verifiers.
We introduce \textbf{BV-Blend}, a critic-free framework that stabilizes advantage estimation by combining prompt-local on-policy statistics with semantic-cluster-conditioned historical moments.
BV-Blend maintains EMA-tracked reward moments for each cluster, derives a confidence weight from a standard error of the mean (SEM) proxy, and uses this weight to blend historical and prompt-local baseline and variance statistics into a standardized advantage for PPO-style clipped updates.
Experiments on verifiable reasoning benchmarks show that BV-Blend improves training stability and performance, and remains robust in regimes where group-normalized methods may stall.
\end{abstract}

\section{Introduction}

Reinforcement Learning with Verifiable Rewards (RLVR) has recently become a practical paradigm for aligning Large Language Models (LLMs) in domains with \emph{objective} correctness signals, such as mathematics and code, where outputs can be automatically verified (e.g., exact match against a reference answer, formal proof checking, or passing unit tests)~\citep{yu2025formalmath,yan2025learning,gui2024logicgame}.
Compared with process-level supervision such as Process Reward Models (PRMs)~\citep{lightman2023let}, RLVR optimizes outcome-based rewards and does not require scoring intermediate reasoning steps.
When verifiers are reliable, collecting multiple rollouts per prompt naturally improves exploration and increases the chance of observing correct solutions, which has underpinned several recent reasoning-focused RL systems~\citep{guo2025deepseek}.
Beyond structured domains, RLVR has also been extended to more diverse settings when reference signals or robust verifiers are available~\citep{su2025crossing}.

However, directly applying the de facto RLHF recipe---PPO with a learned critic~\citep{ziegler2019fine,ouyang2022training}---to RLVR exposes a practical tension.
With sparse, trajectory-level verifiable rewards, learning an accurate value function can be challenging, while the additional critic introduces non-trivial memory and compute overhead, as well as additional training complexity, at LLM scale~\citep{ouyang2022training}.
These considerations have accelerated interest in critic-free alignment.
Preference-optimization methods such as DPO~\citep{rafailov2024direct} avoid explicit reward modeling and online RL, but are primarily studied under preference or utility supervision rather than settings where informative, programmatic outcome rewards are directly available.
We therefore focus on \emph{direct-reward, critic-free} policy optimization, where GRPO~\citep{shao2024deepseekmath} is a prominent baseline that derives advantages through prompt-local reward normalization within each prompt group.

Despite its simplicity, prompt-local normalization can be unstable.
The resulting advantage estimator may depend strongly on transient within-group statistics, leading to high variance and unstable learning dynamics, which has motivated a growing body of analyses and fixes~\citep{liang2025group,chen2025spectral,mroueh2025reinforcement,yu2025dapo}.
Moreover, RLVR commonly encounters a cold-start regime in which the current policy produces predominantly incorrect solutions; under binary verifiers, many prompt groups become effectively deterministic (e.g., all failures), yielding (near-)zero within-group reward variance.
In this \emph{zero-variance} regime, standard within-group normalization produces (near-)zero advantages, removing the learning signal for those prompt groups~\citep{le2025no}.

To address this instability, we propose \textbf{BV-Blend}, a critic-free framework that stabilizes advantage estimation by combining (i) an on-policy prompt-local signal with (ii) low-variance historical moments aggregated over semantically similar prompts.
BV-Blend computes an uncertainty-aware confidence weight from a standard-error-of-the-mean (SEM) proxy of the historical statistics, downweighting unreliable historical information while relying more on it when the historical estimate is better supported.
As a result, BV-Blend mitigates advantage collapse on zero-variance prompt groups without training a critic and remains compatible with PPO-style optimization.

Our contributions are:
\begin{itemize}
    \item We analyze the \emph{zero-variance} failure mode of prompt-local (group-normalized) advantage estimation in direct-reward, critic-free RL, where the normalized learning signal vanishes when all rollouts in a prompt group receive identical rewards.
    \item We introduce \textbf{BV-Blend}, an uncertainty-aware historical blending mechanism that stabilizes prompt-local advantage estimation by leveraging semantic-cluster-conditioned historical moments, without training a critic.
    \item We empirically demonstrate that BV-Blend improves training stability and performance on verifiable reasoning benchmarks, and remains robust in regimes where standard group-normalized methods may stall.
\end{itemize}

\section{Related Work}

\paragraph{Critic-based and preference-based alignment.}
Policy-gradient methods typically reduce variance with a learned value-function baseline; GAE~\citep{schulman2015high} (commonly paired with PPO~\citep{schulman2017ppo}) yields lower-variance advantages than Monte Carlo estimators such as REINFORCE~\citep{williams1992simple}. In RLHF-style LLM alignment, PPO pipelines often train an auxiliary value head alongside the policy during RL fine-tuning~\citep{ouyang2022training}. A separate line of work avoids online RL and critic training by directly optimizing objectives from preference or utility supervision, e.g., DPO~\citep{rafailov2024direct}, KTO~\citep{ethayarajh2024kto}, and SimPO~\citep{meng2024simpo}. These methods are primarily studied under preference or utility supervision (or implicit rewards), rather than explicit outcome-based verifiable rewards.

\paragraph{Direct-reward, critic-free policy optimization.}
Our work builds on critic-free policy optimization with verifiable rewards, exemplified by Group Relative Policy Optimization (GRPO)~\citep{shao2024deepseekmath}. GRPO avoids a learned critic by normalizing rewards within each prompt group, but the learning signal can be sensitive to transient within-group statistics and may collapse when within-group reward variance is near zero. Recent work has explored related issues and remedies, including correcting optimization bias in GRPO-style objectives (Dr.~GRPO)~\citep{liu2025understanding}, temporally smoothing baselines via lightweight Bayesian or Kalman-style updates (KRPO)~\citep{wang2025kalman}, extending GRPO-style optimization to multi-turn tool-use settings (ARPO)~\citep{dong2025agentic}, and extracting learning signals from zero-variance prompts (RL-ZVP)~\citep{le2025no}. Other work also mitigates zero-variance or low-signal training regimes through data- or sampling-level interventions, such as dynamic sampling~\citep{yu2025dapo}.

Our method is most closely related to approaches that stabilize GRPO-style advantage estimation without training a critic. Compared with KRPO, which primarily smooths statistics across training steps, BV-Blend maintains historical reward moments conditioned on semantic clusters of prompts. Compared with RL-ZVP, which specifically targets zero-variance prompts, BV-Blend uses uncertainty-aware blending of prompt-local and cluster-conditioned historical statistics to form a unified standardized advantage for all prompt groups. Compared with dynamic-sampling-based remedies, BV-Blend operates at the level of the advantage estimator rather than the data-selection policy. Overall, BV-Blend is a critic-free method that modifies the advantage estimator while remaining compatible with standard PPO-style training pipelines.

\begin{figure*}[t]
    \centering
    \includegraphics[width=0.9\linewidth]{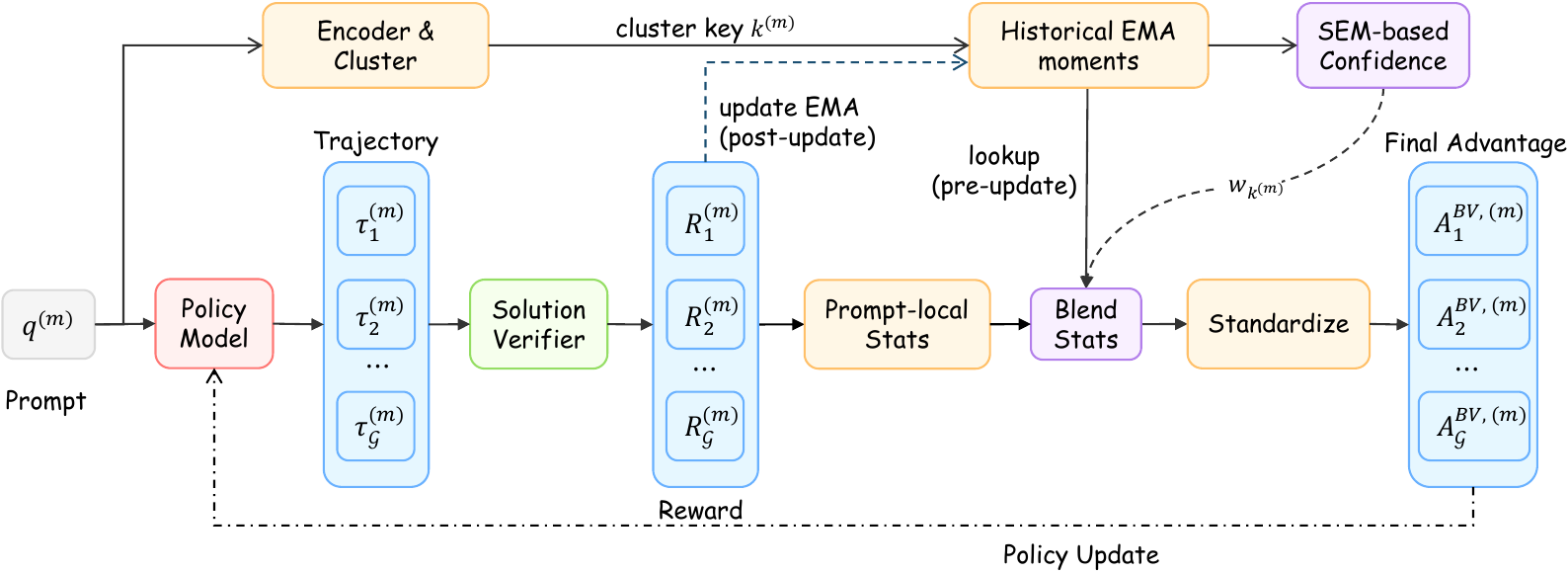}
\caption{%
\textbf{BV-Blend overview.}
For each prompt \(q^{(m)}\), we sample \(G\) trajectories \(\{\tau_i^{(m)}\}\) with the behavior policy and obtain verifier rewards \(\{R_i^{(m)}\}\).
We compute prompt-local statistics \((\mu_{\mathcal{G}}^{(m)}, \sigma_{\mathcal{G}}^{(m)})\), embed \(q^{(m)}\), and assign it to a semantic cluster \(k^{(m)}\).
Using \emph{pre-update} EMA moments \((\mu_{\mathrm{hist}}(k), v_{\mathrm{hist}}(k), N_k^{\mathrm{eff}})\), we compute the SEM-based confidence \(w_k\) (Eq.~\eqref{eq:weight}; cold start: \(w_k{=}0\) for unseen clusters), blend baseline and variance statistics to obtain \((b^{(m)}, s^{(m)})\) (Eq.~\eqref{eq:blend_stats}), and form advantages \(A_i^{\mathrm{BV},(m)}\) (Eq.~\eqref{eq:bv_adv}) for a PPO-style update.
EMA moments are updated \emph{post-update} using the current batch.%
}
    \label{fig:framework}
\end{figure*}

\section{Method}
\label{sec:method}

Critic-free policy optimization for LLM alignment often relies on prompt-local (group-dependent) advantage normalization (e.g., GRPO-style estimators), but when within-group reward dispersion is small, prompt-local standardization can yield near-zero advantages and effectively remove the learning signal for that prompt group. We propose \textbf{BV-Blend}, which constructs a \emph{single} per-trajectory advantage by combining prompt-local statistics with semantic-cluster-conditioned historical moments, using a confidence weight derived from a standard-error-of-the-mean (SEM) proxy. Concretely, BV-Blend blends baseline and variance statistics, then standardizes once using the resulting scale. BV-Blend keeps the PPO-style clipped objective unchanged~\citep{schulman2017ppo} and modifies only the advantage estimator: the SEM weight is computed from \emph{pre-update} EMA statistics, while EMA moments are updated \emph{post-update} using the current batch (Fig.~\ref{fig:framework}). As in other group-based normalization methods, we stop gradients through all reward statistics and do not assume the resulting normalized estimator is unbiased.

\subsection{Background: prompt-local normalization in critic-free RL}
\label{sec:background}

A training batch contains \(M\) prompts \(\{q^{(m)}\}_{m=1}^{M}\).
For each prompt \(q^{(m)}\), the behavior policy \(\pi_{\theta_{\mathrm{old}}}\) samples \(G\) trajectories
\(\mathcal{G}(q^{(m)})=\{\tau^{(m)}_i\}_{i=1}^{G}\).
Each trajectory \(\tau^{(m)}_i=(y^{(m)}_{i,1},\dots,y^{(m)}_{i,T^{(m)}_i})\) receives a scalar trajectory-level reward \(R^{(m)}_i\) from an external verifier.

For token position \(t\) in \(\tau^{(m)}_i\), define the importance ratio
\begin{equation}
r^{(m)}_{i,t}(\theta)=\frac{\pi_\theta\!\left(y^{(m)}_{i,t}\mid q^{(m)},y^{(m)}_{i,<t}\right)}{\pi_{\theta_{\mathrm{old}}}\!\left(y^{(m)}_{i,t}\mid q^{(m)},y^{(m)}_{i,<t}\right)},
\label{eq:ratio}
\end{equation}
and its clipped version
\(\tilde r^{(m)}_{i,t}(\theta)=\operatorname{clip}\!\bigl(r^{(m)}_{i,t}(\theta),1-\epsilon,1+\epsilon\bigr)\).
Since rewards are trajectory-level, we compute a single advantage \(A^{(m)}_i\) per trajectory and apply it to all completion tokens.
We restrict optimization to completion tokens using a mask \(m^{(m)}_{i,t}\in\{0,1\}\) and define masked token means as
\begin{equation}
\mathbb{E}_{t\sim\tau^{(m)}_i}[f_{i,t}]
\triangleq
\frac{\sum_{t=1}^{T_i^{(m)}} m^{(m)}_{i,t}\, f_{i,t}}{\sum_{t=1}^{T_i^{(m)}} m^{(m)}_{i,t}},
\label{eq:masked_mean}
\end{equation}
where each completion contains at least one generated token so the denominator is non-zero.
Throughout, \(\mathbb{E}_{t\sim\tau^{(m)}_i}[\cdot]\) denotes a \emph{mean} over completion tokens (Eq.~\eqref{eq:masked_mean}) to avoid length-dependent gradient scaling; concretely, \(m^{(m)}_{i,t}=1\) for generated completion tokens up to (and including) EOS, and \(0\) for prompt tokens and padding.
In implementation, \(A^{(m)}_i\) is treated as a trajectory-level constant (no gradient through reward statistics).

GRPO-style training forms a prompt-local standardized advantage~\citep{shao2024deepseekmath}:
\begin{equation}
A^{\text{GRPO},(m)}_i = \frac{R^{(m)}_i - \mu_{\mathcal{G}}^{(m)}}{\sigma_{\mathcal{G}}^{(m)} + \delta},
\label{eq:grpo_adv}
\end{equation}
where \(\mu_{\mathcal{G}}^{(m)}\) and \(\sigma_{\mathcal{G}}^{(m)}\) are the mean and standard deviation of \(\{R^{(m)}_i\}_{i=1}^{G}\), and \(\delta>0\) is a small constant.
If \(G<2\), we set \(\sigma_{\mathcal{G}}^{(m)}=0\), yielding \(A^{\text{GRPO},(m)}_1=0\).
The clipped surrogate objective is
\begin{equation}
\label{eq:grpo_obj}
\resizebox{\columnwidth}{!}{$\displaystyle
\begin{aligned}
\mathcal{J}_{\text{GRPO}}(\theta)
&=
\mathbb{E}_{m}\!\Biggl[
\frac{1}{G}\sum_{i=1}^{G}
\mathbb{E}_{t\sim\tau^{(m)}_i}\!\Bigl[
\min\!\Bigl(
r^{(m)}_{i,t}(\theta)\,A^{\text{GRPO},(m)}_i,\\
&\qquad\qquad\qquad\qquad
\tilde r^{(m)}_{i,t}(\theta)\,A^{\text{GRPO},(m)}_i
\Bigr)
\Bigr]
\Biggr].
\end{aligned}
$}
\end{equation}

When \(\sigma_{\mathcal{G}}^{(m)}\approx 0\), Eq.~\eqref{eq:grpo_adv} collapses toward zero advantages, effectively removing the learning signal for that prompt group.

\subsection{BV-Blend: SEM-driven blending of historical and prompt-local baselines}
\label{sec:bvblend}

BV-Blend computes a single standardized advantage \(A^{\mathrm{BV},(m)}_i\) by blending \emph{baseline and variance statistics} and then standardizing once (taking a square root to obtain the scale).
It follows the common form \(A=(R-b)/s\): GRPO uses prompt-local \((b,s)=(\mu_{\mathcal{G}}^{(m)},\sigma_{\mathcal{G}}^{(m)})\), whereas BV-Blend interpolates between prompt-local statistics and \emph{semantic-cluster-conditioned} EMA moments, with the interpolation controlled by a SEM-based confidence \(w_k\).

\subsubsection{Prompt clustering}
\label{sec:clustering}

Each prompt \(q^{(m)}\) is embedded by a frozen encoder \(E(\cdot)\) and assigned to a fixed \(K\)-means codebook \(\{c_j\}_{j=1}^{K}\):
\begin{equation}
k^{(m)}=\arg\min_{j\in\{1,\dots,K\}}\ \|E(q^{(m)})-c_j\|_2^2.
\label{eq:cluster}
\end{equation}
The codebook is trained offline on a representative prompt corpus and kept fixed during RL to avoid cluster-identity drift.
We report the encoder choice, \(K\), codebook training corpus, and \(K\)-means settings (implementation, seed, and iterations) for reproducibility.

\subsubsection{Historical statistics with EMA moments}
\label{sec:ema}

For each cluster \(k\), we maintain EMA moments: mean \(m_{1}(k)\), second raw moment \(m_{2}(k)\), and EMA mass \(N^{\mathrm{eff}}_k\) (here \(m_{1},m_{2}\) denote moments, not the prompt index \(m\)).
Given batch \(\mathcal{B}\), let \(\mathcal{I}_{\mathcal{B}}(k)=\{(m,i): k^{(m)}=k\}\) and \(N_{\mathcal{B}}(k)=|\mathcal{I}_{\mathcal{B}}(k)|\).
Define sufficient statistics
\[
\begin{aligned}
S_{1,\mathcal{B}}(k)
&= \sum_{(m,i)\in \mathcal{I}_{\mathcal{B}}(k)} R^{(m)}_i,\\
S_{2,\mathcal{B}}(k)
&= \sum_{(m,i)\in \mathcal{I}_{\mathcal{B}}(k)} \bigl(R^{(m)}_i\bigr)^2.
\end{aligned}
\]

In distributed training, we aggregate \(\{S_{1,\mathcal{B}}(k),S_{2,\mathcal{B}}(k),N_{\mathcal{B}}(k)\}\) across workers and apply the EMA update \emph{after} the policy optimization step.
If \(N_{\mathcal{B}}(k)>0\), define the batch mean and second raw moment
\[
\mu_{\mathcal{B}}(k)=\frac{S_{1,\mathcal{B}}(k)}{N_{\mathcal{B}}(k)},\qquad
\mu_{2,\mathcal{B}}(k)=\frac{S_{2,\mathcal{B}}(k)}{N_{\mathcal{B}}(k)}.
\]
With EMA rate \(\gamma\in(0,1]\),
\begin{align}
m_{1}(k) &\leftarrow (1-\gamma)m_{1}(k)+\gamma\,\mu_{\mathcal{B}}(k), \nonumber\\
m_{2}(k) &\leftarrow (1-\gamma)m_{2}(k)+\gamma\,\mu_{2,\mathcal{B}}(k), \label{eq:ema_moments}\\
N^{\mathrm{eff}}_k &\leftarrow (1-\gamma)N^{\mathrm{eff}}_k+\gamma\,N_{\mathcal{B}}(k). \nonumber
\end{align}
We define
\begin{equation}
\label{eq:hist_var}
\begin{aligned}
v_{\mathrm{hist}}(k)
&= \max\!\bigl(m_{2}(k)-m_{1}(k)^2,\,0\bigr),\\
\sigma_{\mathrm{hist}}(k)
&= \sqrt{v_{\mathrm{hist}}(k)},\\
\mu_{\mathrm{hist}}(k)
&= m_{1}(k).
\end{aligned}
\end{equation}

\paragraph{Initialization and cold start.}
On first observation of cluster \(k\), we initialize
\(
N^{\mathrm{eff}}_k\!\leftarrow\! N_{0},
\;
m_{1}(k)\!\leftarrow\! \mu_{\mathcal{B}}(k),
\;
m_{2}(k)\!\leftarrow\! \mu_{\mathcal{B}}(k)^2 + V_{\mathrm{prior}},
\)
with \(N_{0}>0\) and \(V_{\mathrm{prior}}>0\).

\paragraph{Update ordering.}
For each batch, we compute \(w_k\) using EMA statistics \emph{before} incorporating the current batch.
If a cluster \(k\) is first observed in the current batch (i.e., no prior EMA state exists at advantage-computation time), we set \(w_k=0\) for this batch (pure prompt-local normalization for \(A^{\mathrm{BV}}\)) and create its EMA state using the initialization above \emph{after} the policy optimization step.
For previously seen clusters, EMA moments are updated \emph{after} the policy optimization step using the aggregated sufficient statistics.

\subsubsection{Uncertainty-to-confidence mapping}
\label{sec:uncertainty}

We quantify historical uncertainty using a SEM-style proxy
\begin{equation}
\mathrm{SEM}_{\mathrm{hist}}(k)=
\frac{\sigma_{\mathrm{hist}}(k)}{\sqrt{N^{\mathrm{eff}}_k+\delta_N}},
\label{eq:sem}
\end{equation}
with \(\delta_N>0\).
We map uncertainty to a confidence weight
\begin{equation}
w_k=\exp\!\left(-\frac{\mathrm{SEM}_{\mathrm{hist}}(k)}{T}\right),
\label{eq:weight}
\end{equation}
where \(T>0\) controls sensitivity to the reward scale.
For clusters with an EMA state, \(w_k\in(0,1]\): lower uncertainty yields larger \(w_k\) (more reliance on historical moments), while higher uncertainty yields smaller \(w_k\) (more reliance on prompt-local statistics).
We report \(T,\gamma,N_0,V_{\mathrm{prior}},\delta_N\) in experiments.

\subsubsection{Baseline-and-scale blending and BV advantage}
\label{sec:blend}

For each prompt group \(m\), compute prompt-local statistics \(\mu_{\mathcal{G}}^{(m)}\) and \(\sigma_{\mathcal{G}}^{(m)}\) over \(\{R^{(m)}_i\}_{i=1}^{G}\).
Let \(w^{(m)}=w_{k^{(m)}}\).
We blend baseline and variance statistics, and take a square root to obtain the scale:
\begin{equation}
\begin{aligned}
    b^{(m)} &= w^{(m)}\,\mu_{\mathrm{hist}}(k^{(m)}) + \bigl(1-w^{(m)}\bigr)\,\mu_{\mathcal{G}}^{(m)}, \\
    s^{(m)} &= \sqrt{\,w^{(m)}\,v_{\mathrm{hist}}(k^{(m)}) + \bigl(1-w^{(m)}\bigr)\bigl(\sigma_{\mathcal{G}}^{(m)}\bigr)^2\,},
\end{aligned}
\label{eq:blend_stats}
\end{equation}
and define the BV-Blend advantage
\begin{equation}
A^{\mathrm{BV},(m)}_i=\frac{R^{(m)}_i-b^{(m)}}{s^{(m)}+\delta}.
\label{eq:bv_adv}
\end{equation}
When \(\sigma_{\mathcal{G}}^{(m)}\approx 0\), the blended scale \(s^{(m)}\) remains non-degenerate whenever the historical term provides non-zero variance, thereby preventing collapse of the learning signal.

\section{Experiments}
\label{sec:experiments}

\paragraph{Domain and Datasets.}
We focus on mathematical reasoning, where evaluation is rigorous and objective.
Compared with more subjective tasks, math offers (i) deterministic, programmatic verification against ground-truth final answers, reducing reliance on preference labels and learned judges that may be biased or exploitable~\citep{DBLP:conf/iclr/HuangQWPT25, DBLP:conf/iclr/ZhengPDLJL25}; (ii) scalable automated evaluation without costly human annotation; and (iii) diverse multi-step problems with unambiguous correctness criteria.
Our training set contains 45{,}000 problems curated from the default 94k split of OpenR1-Math-220k~\citep{openr1, yan2025learning}.
OpenR1-Math-220k is built from NuminaMath 1.5 prompts and includes 2--4 reasoning traces generated by DeepSeek-R1; most traces are verified by \textit{Math-Verify} (with a small portion additionally judged by an LLM), and each problem has at least one correct trace~\citep{openr1, numina_math_datasets, guo2025deepseek}.
We use \textit{Math-Verify} to remove instances with invalid or unverifiable final answers, and further filter samples whose traces exceed 8{,}192 tokens.
This curated set is used as (1) prompts for on-policy rollouts, (2) ground-truth answers for deterministic reward computation, and (3) a corpus of reasoning traces for training SFT baselines.

\begin{table*}[t]
\centering
\caption{Main results on Qwen2.5-Math-7B. We compare \method with baselines on mathematical reasoning and OOD generalization benchmarks. Best and second-best in each column are in \textbf{bold} and \underline{underlined}. Math Reasoning Avg.\ averages AIME 2024/2025, AMC, MATH-500, Minerva, and Olympiad; Generalization Avg.\ averages ARC-C, GPQA*, and MMLU-Pro.}

\label{tab:main_results}
\resizebox{.92\textwidth}{!}{
\setlength{\tabcolsep}{2.5pt}
\renewcommand{\arraystretch}{1.3}
\begin{tabular}{@{}lcccccc cccc@{}}
\toprule
\multirow{2}{*}{\textbf{Model}} & \multicolumn{6}{c}{\textbf{Math Reasoning Performance}} & \multicolumn{4}{c}{\textbf{Generalization Performance}} \\
\cmidrule(lr){2-7} \cmidrule(lr){8-11}
 & \textbf{AIME 24/25} & \textbf{AMC} & \textbf{MATH-500} & \textbf{Minerva} & \textbf{Olympiad} & \textbf{Avg.} & \textbf{ARC-C} & \textbf{GPQA*} & \textbf{MMLU-Pro} & \textbf{Avg.} \\
\midrule
Qwen-Base & 11.5/4.9 & 31.3 & 43.6 & 7.4 & 15.6 & 19.1 & 18.2 & 11.1 & 16.9 & 15.4 \\
Qwen-Instruct & 12.5/10.2 & 48.5 & 80.4 & 32.7 & 41.0 & 37.6 & 70.3 & 24.7 & 34.1 & 43.0 \\
\midrule
\multicolumn{11}{c}{\textit{Baselines: On-Policy RLVR}} \\
\midrule
GRPO (our replication) & 25.1/15.3 & 62.0 & 84.4 & 39.3 & 46.8 & 45.5 & \underline{82.3} & 40.4 & 49.3 & 57.3 \\
SimpleRL-Zero & 27.0/6.8 & 54.9 & 76.0 & 25.0 & 34.7 & 37.4 & 30.2 & 23.2 & 34.5 & 29.3 \\
OpenReasoner-Zero & 16.5/15.0 & 52.1 & 82.4 & 33.1 & 47.1 & 41.0 & 66.2 & 29.8 & \textbf{58.7} & 51.6 \\
PRIME-Zero & 17.0/12.8 & 54.0 & 81.4 & 39.0 & 40.3 & 40.8 & 73.3 & 18.2 & 32.7 & 41.4 \\
Oat-Zero & \underline{33.4}/11.9 & 61.2 & 78.0 & 34.6 & 43.4 & 43.8 & 70.1 & 23.7 & 41.7 & 45.2 \\
\midrule
\multicolumn{11}{c}{\textit{Baselines: Hybrid \& Off-Policy Methods}} \\
\midrule
SFT & 22.2/22.3 & 52.8 & 82.6 & \underline{40.8} & 43.7 & 44.1 & 75.2 & 24.7 & 42.7 & 47.5 \\
SFT+RL & 25.8/23.1 & 62.7 & 87.2 & 39.7 & 50.4 & 48.2 & 72.4 & 24.2 & 37.7 & 44.8 \\
ReLIFT & 28.2/20.1 & 64.9 & 87.4 & 33.8 & 52.5 & 47.8 & 76.2 & 37.9 & 52.5 & 55.5 \\
LUFFY & 29.4/23.1 & 65.6 & \underline{87.6} & 37.5 & \underline{57.2} & 50.1 & 80.5 & 39.9 & 53.0 & 57.8 \\
LUFFY$^{\dagger}$ & 30.7/\textbf{25.5} & \underline{66.2} & 86.8 & \textbf{41.2} & 55.3 & \underline{51.0} & 81.8 & \underline{49.0} & 54.7 & \underline{61.8} \\
\midrule
\rowcolor{green!10} \method (\textit{ours}) & \textbf{34.2}/\underline{23.6} & \textbf{66.5} & \textbf{87.9} & 40.7 & \textbf{57.4} & \textbf{51.7} & \textbf{83.1} & \textbf{52.6} & \underline{56.5} & \textbf{64.1} \\
\bottomrule
\end{tabular}
}
\end{table*}

\paragraph{Evaluation.}
We evaluate \method on benchmarks probing two complementary aspects:
(i) \emph{in-domain} mathematical reasoning across a broad difficulty range---from pre-college competition problems (AMC~\citep{DBLP:conf/acl/HeLBHTSHHHZLQL024}), through university-level problem solving (MATH-500~\citep{DBLP:conf/nips/HendrycksBKABTS21}, Minerva~\citep{DBLP:conf/nips/LewkowyczADDMRS22}), to elite competition-level challenges (AIME 2024/2025~\citep{li2024numinamath});
and (ii) \emph{out-of-distribution} (OOD) generalization to non-mathematical benchmarks, including abstract reasoning (ARC-C~\citep{DBLP:journals/corr/abs-1803-05457}), expert-level scientific knowledge (GPQA-diamond~\citep{DBLP:journals/corr/abs-2311-12022}), and multidisciplinary problem solving (MMLU-Pro~\citep{DBLP:conf/nips/WangMZNCGRAHJLK24}).
Our main results use Qwen2.5-Math-7B~\citep{yang2024qwen2} as the backbone; additional backbones are reported in the \emph{Implementation Details} paragraph below.
Unless otherwise specified, we decode for evaluation with temperature \(0.6\).
For multiple-choice benchmarks, we randomly permute answer options for each question (with labels remapped accordingly) to mitigate position bias.
We report pass@1 on large-scale benchmarks (MATH-500, Minerva, and the OOD suite).
For smaller and more challenging test sets (AIME and AMC), we instead report avg@32, defined as the mean success rate over 32 independent samples per problem, which is more stable under stochastic decoding.

\paragraph{Baselines.}
To contextualize \method, we compare against baselines built on the same Qwen2.5-Math-7B backbone (Table~\ref{tab:main_results}), grouped into two categories.
\textbf{(1) On-policy RLVR methods} start from the base model and optimize using only on-policy rollouts with verifiable rewards (i.e., without SFT initialization or off-policy demonstrations).
This category includes our replication of GRPO~\citep{shao2024deepseekmath} and representative ``Zero'' RLVR systems trained under different reward designs and training recipes:
SimpleRL-Zero~\citep{zeng2025simplerl}, Open-Reasoner-Zero~\citep{hu2025open}, PRIME-Zero~\citep{cui2025process}, and Oat-Zero~\citep{drgrpo}.
\textbf{(2) Hybrid \& off-policy methods} leverage additional external data beyond on-policy rollouts, including SFT, a sequential SFT$\rightarrow$RL pipeline, ReLIFT~\citep{ma2025learning}, and LUFFY~\citep{yan2025learning}.
For LUFFY, we report both the standard and extended-training ($\dagger$) variants when available.

\begin{figure*}[t]
    \centering
    \begin{subfigure}[b]{0.24\textwidth}
        \centering
        \includegraphics[width=\linewidth]{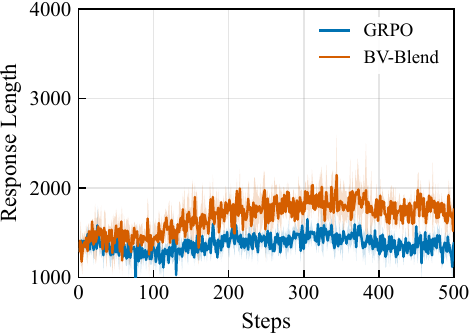}
        \caption{Response length}
        \label{fig:resp_len}
    \end{subfigure}\hfill
    \begin{subfigure}[b]{0.24\textwidth}
        \centering
        \includegraphics[width=\linewidth]{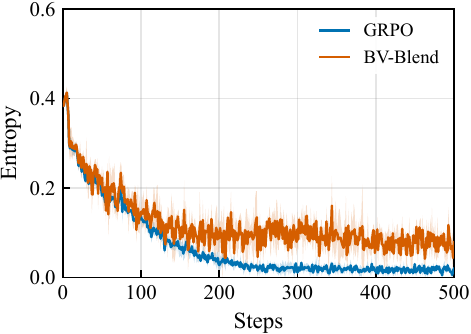}
        \caption{Training entropy}
        \label{fig:train_ent}
    \end{subfigure}\hfill
    \begin{subfigure}[b]{0.24\textwidth}
        \centering
        \includegraphics[width=\linewidth]{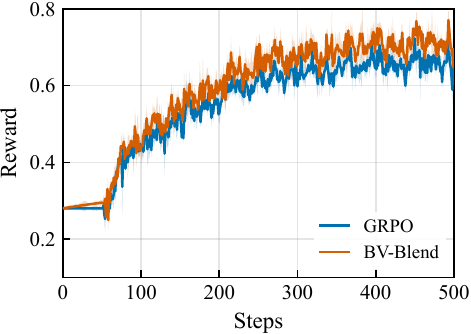}
        \caption{Training reward}
        \label{fig:train_rew}
    \end{subfigure}\hfill
    \begin{subfigure}[b]{0.236\textwidth}
        \centering
        \includegraphics[width=\linewidth]{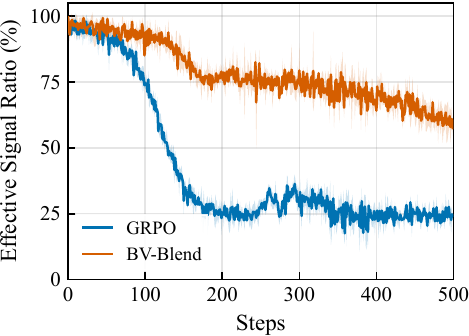}
        \caption{Effective signal ratio}
        \label{fig:eff_signal}
    \end{subfigure}

\caption{\textbf{GRPO vs.\ BV-Blend.}
    We track (a) response length (tokens), (b) policy training entropy, (c) mean training reward (verifier score),
    and (d) the effective-signal ratio: the fraction of prompts whose method-specific normalization scale remains non-degenerate during training (GRPO: \(\sigma_{\mathcal{G}}^{(m)}\); BV-Blend: \(s^{(m)}\) in Eq.~\eqref{eq:blend_stats}).}
    \label{fig:training_dynamics}
\end{figure*}

\begin{figure*}[t]
    \centering
    \includegraphics[width=0.98\textwidth]{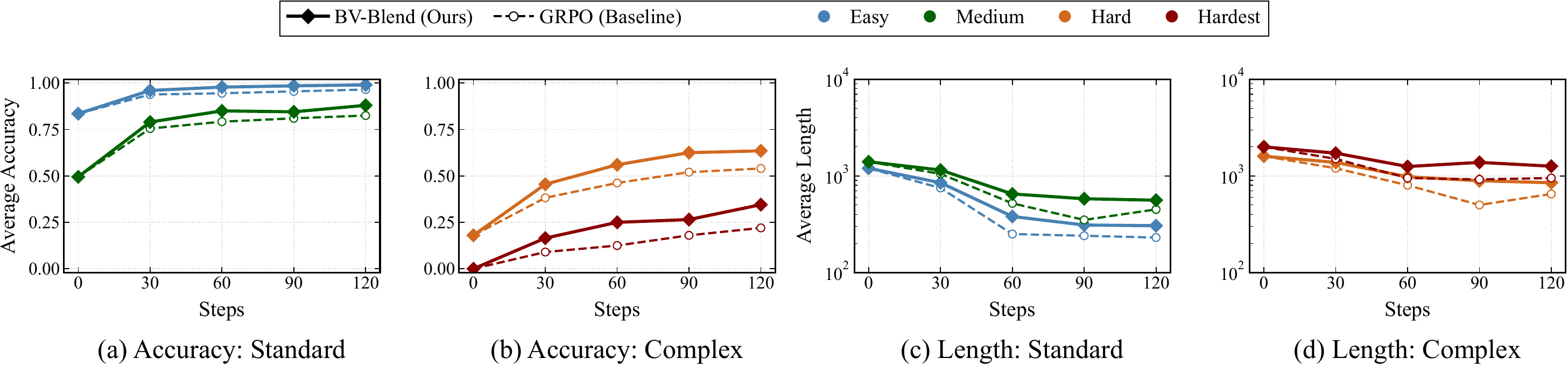}
    \caption{\textbf{Difficulty-stratified \method\ vs.\ GRPO.}
    We partition prompts into four difficulty buckets (Easy/Medium/Hard/Hardest) using a fixed pre-RL difficulty estimate (Appendix~\ref{app:difficulty_stratification}) shared across methods, and track \emph{verifier accuracy} (a,b; fraction of prompts with correct final answers) and \emph{average response length} (c,d) across training checkpoints. The left pair (a,c) reports the \textit{Standard} subset and the right pair (b,d) reports the \textit{Complex} subset. Response length is measured on completion tokens and plotted on a log scale.}
    \label{fig:difficulty_dynamics}
\end{figure*}

\paragraph{Implementation Details.}
We evaluate \method across multiple backbones. Our primary testbed is Qwen2.5-Math-7B to align with prior RLVR work. We additionally report results on Qwen2.5-Math-1.5B, Qwen2.5-7B-Instruct, and Llama-3.1-8B-Instruct to assess robustness across model scales and instruction tuning. Unless otherwise specified, we use a shared training setup across all runs. We optimize with AdamW and a cosine learning-rate schedule with linear warmup, with a peak learning rate of \(1\times10^{-6}\). Each iteration collects a global rollout batch of 128 trajectories (16 prompts \(\times\) 8 rollouts) with sampling temperature \(1.0\), and performs policy updates with a trajectory minibatch size of 64. Rewards are binary: we assign \(+1\) if the extracted final answer is verified as correct by \textit{Math-Verify}, and \(0\) otherwise, with no intermediate or format-based shaping rewards. We use a PPO-style clipped objective shared by \method and the PPO-style baselines, with clipping \(\epsilon=0.2\) and an entropy bonus coefficient of \(0.01\). Unless otherwise stated, we include a KL-to-reference term in the objective but set its coefficient to \(\beta=0\) in the main experiments, effectively disabling it to isolate the effect of different advantage estimators. For \method, we maintain cluster-conditioned historical moments using EMA with update rate \(\gamma=0.9\), and use \(\delta=10^{-8}\) for numerical stability in advantage computation. The remaining \method-specific hyperparameters and clustering settings are fixed across experiments unless otherwise stated. We do not apply any additional global advantage normalization beyond the method-specific estimator (e.g., GRPO or \method). For purely on-policy approaches (including \method and our GRPO replication), all 8 rollouts per prompt are generated on-policy from the current policy. All models are trained for 500 iterations. All experiments are conducted on 8 NVIDIA RTX PRO 6000 GPUs (96\,GB VRAM each).

\subsection{Main Results}
\label{sec:main_results}

Table~\ref{tab:main_results} summarizes results on the Qwen2.5-Math-7B backbone.
Overall, \method achieves the best average performance among the compared methods on both in-domain mathematical reasoning and OOD benchmarks.
Notably, \method is a purely on-policy RLVR approach, yet it remains competitive with hybrid/off-policy systems that additionally leverage external data.

\paragraph{In-Domain Mathematical Reasoning.}
We evaluate in-domain performance on five math benchmarks, where AIME is reported for two years (AIME 2024/2025), yielding six scores in total for the Math Reasoning Avg.
AIME and AMC are reported as avg@32, while the remaining benchmarks use pass@1.
Across these six scores, \method attains the highest average (51.7\%), outperforming the strongest on-policy baseline Oat-Zero (43.8\%) by 7.9 points.
It also exceeds ReLIFT (47.8\%) and LUFFY (50.1\%), and is slightly above the extended-training LUFFY$\dagger$ variant (51.0\%).
At the per-benchmark level, \method achieves the best reported results on AIME 2024 (34.2\%), AMC (66.5\%), MATH-500 (87.9\%), and Olympiad (57.4\%).
Meanwhile, it is marginally below the best baseline on Minerva (40.7\% vs.\ 41.2\%) and on AIME 2025 (23.6\% vs.\ 25.5\%), indicating broad but non-uniform gains across evaluation settings.

\paragraph{OOD Generalization.}
On the OOD suite (ARC-C, GPQA*, and MMLU-Pro), \method again achieves the best average score (64.1\%), improving over the next-best baseline LUFFY$\dagger$ (61.8\%) by 2.3 points.
The largest gains appear on reasoning-centric benchmarks: \method achieves the best results on ARC-C (83.1\%) and GPQA* (52.6\%), exceeding LUFFY$\dagger$ by 3.6 points on GPQA*.
On MMLU-Pro, \method is slightly below the best baseline (56.5\% vs.\ 58.7\%), suggesting that improvements are more pronounced on benchmarks emphasizing multi-step reasoning, while broad knowledge coverage may depend more on pretraining and instruction tuning.

Taken together, Table~\ref{tab:main_results} shows that stabilizing the advantage signal can improve on-policy RLVR training, yielding substantial gains without requiring additional supervision beyond verifiable rewards, while remaining competitive with hybrid/off-policy approaches that leverage extra data.

\begin{table*}[t]
    \centering
    \renewcommand{\arraystretch}{1.15}
    \caption{\textbf{Ablations of \method on Qwen2.5-Math-7B.} We ablate the SEM-based confidence weighting by constructing \(w_k\) from only the EMA effective count \(N_k^{\text{eff}}\) or only the historical scale \(\sigma_{\text{hist}}(k)\). Avg.\ is computed over six scores (AIME 2024, AIME 2025, AMC, MATH-500, Minerva, Olympiad).}
    \label{tab:ablations}
    \resizebox{0.78\textwidth}{!}{
    \setlength{\tabcolsep}{3.0pt}
    \begin{tabular}{lcccccc}
    \toprule
    \textbf{Model} & \textbf{AIME 24/25} & \textbf{AMC} & \textbf{MATH-500} & \textbf{Minerva} & \textbf{Olympiad} & \textbf{Avg.} \\
    \midrule
    GRPO (baseline) & 25.1/15.3 & 62.1 & 84.3 & 39.3 & 46.9 & 45.5 \\
    Naive historical averaging ($w{=}0.5$) & 25.8/16.1 & 58.2 & 79.6 & 37.9 & 44.5 & 43.7 \\
    \midrule
    \method ($w_k$ from $N_k^{\text{eff}}$ only) & 31.5/22.2 & 64.2 & 86.8 & 39.6 & 53.9 & 49.7 \\
    \method ($w_k$ from $\sigma_{\text{hist}}$ only) & 29.1/21.3 & 65.2 & 87.3 & 40.3 & 51.2 & 49.1 \\
    \hdashline
    \rowcolor{green!10}
    \textbf{\method (full SEM-based confidence)} & \textbf{34.2}/\textbf{23.6} & \textbf{66.4} & \textbf{87.9} & \textbf{40.7} & \textbf{57.4} & \textbf{51.7} \\
    \bottomrule
    \end{tabular}}
\end{table*}

\subsection{Analysis of Training Dynamics}
\label{sec:training_dynamics}

To diagnose how \method shapes optimization, Fig.~\ref{fig:training_dynamics} compares BV-Blend with the on-policy baseline GRPO across four training signals.
BV-Blend consistently produces longer responses and reaches a higher, more stable length plateau (Fig.~\ref{fig:resp_len}), which is consistent with sustaining multi-step reasoning traces under the same rollout budget.
While entropy decreases for both methods early in training, BV-Blend maintains a higher residual policy entropy throughout (Fig.~\ref{fig:train_ent}), which is consistent with slower policy concentration and more persistent exploration.
This is accompanied by slightly higher and noticeably smoother training rewards (Fig.~\ref{fig:train_rew}), with reduced volatility relative to GRPO.

Most importantly, the effective-signal ratio (Fig.~\ref{fig:eff_signal}) highlights a difference in the resulting learning signal.
We define the effective-signal ratio as the fraction of prompts whose normalization scale is non-degenerate under the corresponding estimator (GRPO: \(\sigma_{\mathcal{G}}^{(m)}\); BV-Blend: \(s^{(m)}\)).
With binary verifier rewards, prompt groups frequently become near-deterministic (all-correct or all-incorrect), causing \(\sigma_{\mathcal{G}}^{(m)}\) to approach zero and thereby collapsing GRPO advantages toward zero, which reduces the number of informative prompt groups available for learning~\citep{shao2024deepseekmath}.
In contrast, BV-Blend preserves a substantially higher effective-signal ratio by stabilizing the baseline and scale via cluster-conditioned historical moments when prompt-local dispersion is small, thereby maintaining usable learning signals and yielding more stable optimization dynamics overall.

\subsection{Difficulty-stratified dynamics}
\label{sec:difficulty_dynamics}

To better understand where \method's gains arise and whether they are accompanied by undesirable verbosity, we analyze learning dynamics under a fixed difficulty stratification. Specifically, following the protocol described in Appendix~\ref{app:difficulty_stratification}, we assign each prompt to one of four buckets (Easy/Medium/Hard/Hardest) using a pre-RL difficulty estimate, and evaluate both \method\ and the on-policy baseline GRPO at the same checkpoints (Steps $0/30/60/90/120$). Fig.~\ref{fig:difficulty_dynamics} reports bucket-wise average verifier accuracy and average completion length (log scale) on both \textit{Standard} and \textit{Complex} subsets. Overall, \method\ yields consistent accuracy improvements on harder buckets—most notably Hard and Hardest—while keeping response lengths comparable and avoiding systematic length blow-up. We observe mild non-monotonic fluctuations across checkpoints, which is expected in on-policy optimization due to sampling noise; importantly, \method\ shows a more consistent upward trend on difficult prompts, consistent with the hypothesis that historically stabilized advantage estimation improves training stability without evidence of systematic length inflation.

\subsection{Ablation Study}
\label{sec:ablation}

Table~\ref{tab:ablations} ablates the core design of \method on Qwen2.5-Math-7B. We compare GRPO, a naive fixed-weight historical mixture (\(w{=}0.5\)), two partial variants that compute the confidence weight using only one historical-uncertainty ingredient, and the full \method.
All \method variants use the same baseline/scale blending in Eq.~\eqref{eq:blend_stats} and differ only in how \(w_k\) is computed from historical statistics.
Concretely, the full method computes the SEM-style uncertainty
\(\mathrm{SEM}_{\mathrm{hist}}(k)=\sigma_{\mathrm{hist}}(k)/\sqrt{N_k^{\mathrm{eff}}+\delta_N}\)
and maps it to confidence via Eq.~\eqref{eq:weight}.
The \(N_k^{\mathrm{eff}}\)-only variant drops the dependence on \(\sigma_{\mathrm{hist}}(k)\) (i.e., \(\mathrm{SEM}(k)\propto 1/\sqrt{N_k^{\mathrm{eff}}+\delta_N}\)),
while the \(\sigma_{\mathrm{hist}}\)-only variant drops the dependence on \(N_k^{\mathrm{eff}}\) (i.e., \(\mathrm{SEM}(k)\propto \sigma_{\mathrm{hist}}(k)\)); in all cases, \(w_k\) is a monotone function of the corresponding uncertainty proxy.

Two observations emerge. First, simply injecting historical information is insufficient: naive averaging drops from 45.5 (GRPO) to 43.7, indicating that historical moments should be used selectively rather than uniformly. Second, either signal alone already improves over GRPO (49.7 and 49.1), and combining them through the SEM proxy performs best (51.7), improving by 6.2 points over GRPO and by 2.0 points over the best single-signal variant. The gains are particularly noticeable on harder benchmarks, such as Olympiad and AIME, which is consistent with the role of SEM-based confidence in stabilizing advantage estimation when prompt-local reward dispersion is small.

\section{Conclusion}
We identify an instability in critic-free RLVR: when prompt-local reward dispersion is small, group-normalized advantages can collapse and weaken the learning signal.
We propose \method\ (BV-Blend), which stabilizes advantage estimation by blending prompt-local statistics with semantic-cluster-conditioned historical moments through an SEM-based confidence weight, while leaving the PPO-style clipped objective unchanged.
Across experiments, \method\ improves training stability and performance on in-domain, OOD, and cross-backbone evaluations.
Future work will explore more effective mechanisms for sharing historical information across prompts, especially under distribution shift.

\section*{Limitations}
\method\ relies on a fixed prompt embedding and clustering pipeline; performance may depend on the encoder and codebook granularity, and may degrade under substantial distribution shift.
It also requires maintaining cluster-conditioned EMA moments and aggregating per-cluster statistics in distributed training, which introduces additional bookkeeping and hyperparameter tuning.
Finally, we primarily evaluate mathematical reasoning with verifiable outcome rewards and a limited set of backbones; broader domains and verifier/reward designs are needed to more fully assess generality.
In addition, under extremely sparse reward regimes, if both the current prompt group and its relevant historical cluster provide little or no reward variation, BV-Blend may still offer limited learning signal.

\section*{Acknowledgments}
This work is supported by the National Key Research and Development Program of China (No.2023YFF0905400), the National Natural Science Foundation of China (No.U2341229) and the Reform Commission Foundation of Jilin Province (No.2024C003).

\bibliography{references}

\newpage
\appendix

\vspace{2em}
\begin{center}
    \Large{\textbf{Appendix}}
\end{center}
\vspace{2em}

\etocdepthtag.toc{appendix}
\etocsettagdepth{chapter}{none}
\etocsettagdepth{appendix}{subsection}
\tableofcontents

\section{Experimental Setup}
\label{subsec:experimental_setup}

This appendix describes the experimental environment, task formulation, datasets, evaluation protocols, and base model configurations. Our goal is to provide sufficient detail for reproducibility.

\subsection{Problem Formulation as a Markov Decision Process (MDP)}

We model autoregressive generation as a finite-horizon episodic Markov Decision Process (MDP)
\((\mathcal{S}, \mathcal{A}, P, r, H)\), where \(H\) is the maximum number of \emph{generated completion tokens} (including EOS when present).

\begin{itemize}
    \item \textbf{State Space (\(\mathcal{S}\)).}
    A state at step \(t\) is the prompt concatenated with the generated prefix:
    \(s_t = [q; y_{<t}] = [q; y_1; \dots; y_{t-1}]\), with initial state \(s_1 = [q]\).

    \item \textbf{Action Space (\(\mathcal{A}\)).}
    The action space is the model vocabulary. At state \(s_t\), the policy \(\pi_\theta(\cdot \mid s_t)\)
    outputs a distribution over tokens, and we sample an action \(a_t = y_t\).

    \item \textbf{Transition Dynamics (\(P\)).}
    Transitions are deterministic: after taking action \(a_t=y_t\), the next state is
    \(s_{t+1} = [q; y_{\le t}] = [s_t; y_t]\).

    \item \textbf{Reward (\(r\)) and Return.}
    We use sparse, outcome-based rewards. A scalar reward is given only at episode termination and is binary:
    \[
    R(\tau)=
    \begin{cases}
    1 & \text{if } \textit{Math-Verify}(\tau)=\text{True},\\
    0 & \text{otherwise},
    \end{cases}
    \]
    where \(\tau=(y_1,\dots,y_T)\) denotes the generated completion and \textit{Math-Verify} returns \texttt{True} if the extracted final answer is judged equivalent to the gold answer under its comparison rules.
    No intermediate rewards are provided (i.e., \(r_t=0\) for \(t<T\)), and the terminal reward equals \(r_T = R(\tau)\).
    In optimization (Sec.~\ref{sec:method}), this trajectory-level reward induces a single per-trajectory advantage that is applied to completion tokens, with gradients stopped through all reward statistics.

    \item \textbf{Termination and Horizon (\(H\)).}
    An episode terminates when either (i) the model emits an end-of-sequence token (EOS), or (ii) the number of generated completion tokens reaches the maximum length limit \(H\).
    We set \(H=8192\) in all experiments.
\end{itemize}

\subsection{Justification for the Mathematical Reasoning Domain}

We select mathematical reasoning as our primary experimental domain for three reasons:
\begin{enumerate}
    \item \textbf{Objective, verifiable rewards.}
    Mathematical problems admit a well-defined notion of correctness, enabling programmatic outcome verification against ground-truth final answers (e.g., by extracting the final answer, normalizing it, and checking equivalence).
    This reduces reliance on subjective annotations or learned judges that may introduce noise or exploitable biases.

    \item \textbf{A natural stress test for prompt-local normalization under sparse binary rewards.}
    With sparse binary rewards, difficult prompts often yield rollout groups where most samples share the same outcome (e.g., all incorrect), making within-prompt reward dispersion small.
    This setting makes it possible to directly study failure cases of prompt-local normalization methods, including GRPO-style estimators, where the normalization scale can become (near-)degenerate and standardized advantages weaken the effective learning signal (Sec.~\ref{sec:method}).

    \item \textbf{Scalable and rigorous evaluation.}
    Automated verification supports efficient evaluation on large test sets at low cost, improving scalability, consistency, and reproducibility across methods and backbones.
\end{enumerate}

\subsection{Reward Computation and Visualized Example}

\paragraph{Reward Computation.}
We compute sparse, outcome-based rewards using \textit{Math-Verify}.
Given a completed trajectory \(\tau\), \textit{Math-Verify} (i) extracts a candidate final answer from the model output using a priority-based extraction pipeline (e.g., preferring content inside \texttt{\textbackslash boxed\{\(\cdot\)\}} when present), (ii) normalizes the extracted text and parses it into a canonical symbolic representation (e.g., via ANTLR-based parsing and \texttt{latex2sympy2\_extended}), and (iii) checks equivalence against the gold answer using rule-based and SymPy-backed comparisons \citep{mathverify}.
We assign a binary trajectory-level reward
\[
R(\tau)=
\begin{cases}
1 & \text{if } \textit{Math-Verify}(\tau)=\text{True},\\
0 & \text{otherwise},
\end{cases}
\]
where parsing failures or non-equivalence return \(0\).
Some answer representations receive specialized handling (e.g., relations, sets/intervals), following \textit{Math-Verify}'s documented comparison rules \citep{mathverify}. This outcome-based formulation is standard in settings with verifiable rewards \citep{hao2025rethinking} and is broadly aligned with evaluation practices for large language models that emphasize reliable outcome assessment \citep{chang2024survey}. It is also compatible with broader reasoning-oriented frameworks that rely on trajectory-level outcomes \citep{hao2026recreate,chen2026actormindemulatinghumanactor}. Similar considerations also arise in structured reasoning tasks such as table understanding and text-to-SQL, where intermediate reasoning must ultimately support correct final outputs \citep{wu2025table,wu2025ucs,wu2025mr}. More broadly, related retrieval settings also emphasize the importance of aligning intermediate evidence with final decisions \citep{ReTrack,HABIT}. Recent work on controllable and explainable reasoning, adaptive exploration, test-time adaptation, and solution refinement further highlights the value of structured intermediate reasoning for reliable final outcomes \citep{dong2026neureasonerexplainablecontrollableunified,zhou2026look,shen2026preconditioned,jiang2026foeforesterrorsmakes}.

\begin{figure}[h!]
\centering
\caption{An illustrative example showing a prompt, a successful and a failed generation trajectory, and their corresponding rewards under \textit{Math-Verify}.}
\label{fig:appendix_example}
\small
\begin{tabularx}{\columnwidth}{|X|}
\hline
\textbf{Prompt}\\
\hline
Let $f(x) = x^2 - 2x + 3$. Find the vertex of the parabola. \\
\hline
\textbf{Successful Generation Trajectory ($\tau_1$)} \\
\hline
The vertex of a parabola $f(x) = ax^2 + bx + c$ is at $x = -b/(2a)$. Here $a=1, b=-2$. So $x = -(-2)/(2\cdot 1) = 1$. The y-coordinate is $f(1) = 1^2 - 2(1) + 3 = 2$. The vertex is $(1, 2)$.
\texttt{\textbackslash boxed\{}(1, 2)\texttt{\}} \\
\hline
\textbf{Reward for $\tau_1$: $R(\tau_1) = 1$} \\
\hline
\hline
\textbf{Failed Generation Trajectory ($\tau_2$)} \\
\hline
The vertex of a parabola $f(x) = ax^2 + bx + c$ is at $x = b/(2a)$. Here $a=1, b=-2$. So $x = -2/(2\cdot 1) = -1$. The y-coordinate is $f(-1) = (-1)^2 - 2(-1) + 3 = 6$. The vertex is $(-1, 6)$.
\texttt{\textbackslash boxed\{}(-1, 6)\texttt{\}} \\
\hline
\textbf{Reward for $\tau_2$: $R(\tau_2) = 0$} \\
\hline
\end{tabularx}
\end{figure}

\subsection{Base Models and Prompting Format}
To support reproducibility, we list all base models and their sources.

\begin{itemize}
    \item \textbf{Primary Model.}
    Our main experiments and ablations use \texttt{Qwen2.5-Math-7B} \citep{qwen2.5_math}, loaded from the Hugging Face Hub (\texttt{Qwen/Qwen2.5-Math-7B}).

    \item \textbf{Robustness-Test Models.}
    To test robustness across model scales and instruction tuning, we additionally evaluate
    \texttt{Qwen/Qwen2.5-Math-1.5B}, \texttt{Qwen/Qwen2.5-7B-Instruct} \citep{qwen2.5},
    and \texttt{meta-llama/Meta-Llama-3.1-8B-Instruct} \citep{llama3}.
    All checkpoints are obtained from the Hugging Face Hub.

    \item \textbf{Prompt Format.}
    We use each checkpoint's tokenizer-provided chat template to render prompts, via
    \texttt{tokenizer.apply\_chat\_template(\dots, add\_generation\_prompt=True)} for both training and evaluation \citep{hf_chat_templating}.
    A typical rendered prompt (shown here for clarity) has the following structure:
\begin{verbatim}
<|im_start|>system
You are a helpful assistant.<|im_end|>
<|im_start|>user
{problem_description}<|im_end|>
<|im_start|>assistant
\end{verbatim}
    where \texttt{\{problem\_description\}} is the problem text from the dataset.

    \item \textbf{Implementation Note.}
Although our experiments use full-parameter fine-tuning rather than PEFT, they are motivated by the broader goal of robust adaptation for large models. This perspective is related to prior work on reducing adaptation overhead in large models \citep{tian2026grass,dong2025aurora,chang2026balora,chang2025lora} and to continual-adaptation strategies that aim to preserve useful knowledge under iterative updates \citep{chen-zeng-2025-prototype}.
\end{itemize}

\section{Implementation Details and Pseudocode}
\label{subsec:implementation_details}

This section describes the implementation of \method\ (BV-Blend), including the historical-moments buffer, update ordering, and simplified pseudocode for advantage computation and training.

\subsection{Historical Moments Buffer}

BV-Blend maintains \emph{semantic-cluster-conditioned} historical reward moments, rather than per-prompt (UID-level) statistics.
Let the clustering codebook have \(K\) clusters (Sec.~\ref{sec:clustering}).
For each cluster \(k\in\{1,\dots,K\}\), we store three EMA statistics:
(i) the EMA mean \(m_{1}(k)\),
(ii) the EMA second raw moment \(m_{2}(k)\),
and (iii) the EMA effective mass \(N^{\mathrm{eff}}_k\).
In practice, we implement these as dense length-\(K\) buffers
\texttt{m1[K]}, \texttt{m2[K]}, \texttt{n\_eff[K]},
together with a boolean \texttt{seen[K]} flag, which simplifies distributed aggregation and implementation.

\paragraph{Cold-start initialization and update ordering.}
For each batch, BV-Blend computes the confidence weight \(w_k\) from the \emph{pre-update} EMA state (i.e., before incorporating the current batch).
If a cluster \(k\) has not been observed previously (\texttt{seen[k]=False}), we treat it as cold start: we set \(w_k=0\) for this batch (pure prompt-local normalization), and initialize its EMA state \emph{after} the policy update using batch statistics:
\begin{equation}
\begin{aligned}
N^{\mathrm{eff}}_k &\leftarrow N_0,\\
m_{1}(k) &\leftarrow \mu_{\mathcal{B}}(k),\\
m_{2}(k) &\leftarrow \mu_{\mathcal{B}}(k)^2 + V_{\mathrm{prior}}.
\end{aligned}
\end{equation}
For clusters with an existing EMA state, we update moments \emph{post-update} using aggregated sufficient statistics from the current batch (Eq.~\eqref{eq:ema_moments}).

\paragraph{Distributed aggregation.}
Given batch \(\mathcal{B}\), we compute per-cluster sufficient statistics
\(S_{1,\mathcal{B}}(k)=\sum_{(m,i)\in \mathcal{I}_{\mathcal{B}}(k)} R_i^{(m)}\),
\(S_{2,\mathcal{B}}(k)=\sum_{(m,i)\in \mathcal{I}_{\mathcal{B}}(k)} (R_i^{(m)})^2\),
and \(N_{\mathcal{B}}(k)=|\mathcal{I}_{\mathcal{B}}(k)|\).
In distributed training, we all-reduce (sum) these dense buffers across workers before applying EMA updates.

\subsection{BV-Blend Training Loop and Advantage Computation}

Algorithm~\ref{alg:bvblend_training_loop} shows the overall loop.
Algorithm~\ref{alg:bvblend_advantage} details BV-Blend advantage computation, which blends \emph{baseline and variance statistics} and then standardizes once (Eq.~\eqref{eq:blend_stats}--\eqref{eq:bv_adv}).
As in the main text, we stop gradients through all reward statistics, including \(\mu_{\mathcal{G}}^{(m)}\), \(\sigma_{\mathcal{G}}^{(m)}\), and EMA moments.

\begin{algorithm}[t]
\caption{Compute BV-Blend Advantages \(A^{\mathrm{BV}}\) (Simplified)}
\label{alg:bvblend_advantage}
\begin{algorithmic}[1]
\REQUIRE Prompts \(\{q^{(m)}\}\), rewards \(\{R_i^{(m)}\}\), EMA buffers \((m_1,m_2,N^{\mathrm{eff}})\) and flags \texttt{seen}, encoder \(E(\cdot)\), codebook \(\{c_j\}\).
\ENSURE Advantages \(\{A_i^{\mathrm{BV},(m)}\}\).

\FOR{each prompt \(m\)}
    \STATE Compute prompt-local mean/std \(\mu_{\mathcal{G}}^{(m)}, \sigma_{\mathcal{G}}^{(m)}\) from \(\{R_i^{(m)}\}_{i=1}^G\).
    \STATE Assign cluster \(k^{(m)} \leftarrow \arg\min_j \|E(q^{(m)})-c_j\|_2^2\).
    \IF{\texttt{seen[$k^{(m)}$]=False} \hfill (unseen cluster, pre-update)}
        \STATE Set \(w^{(m)} \leftarrow 0\). \hfill (cold start; pure prompt-local)
        \STATE Set \(b^{(m)} \leftarrow \mu_{\mathcal{G}}^{(m)}\), \(s^{(m)} \leftarrow \sigma_{\mathcal{G}}^{(m)}\).
        \STATE Mark \(k^{(m)}\) as ``needs init'' for post-update initialization.
    \ELSE
        \STATE \(\mu_{\mathrm{hist}} \leftarrow m_1(k^{(m)})\);
        \(v_{\mathrm{hist}} \leftarrow \max(m_2(k^{(m)})-m_1(k^{(m)})^2,0)\).
        \STATE \(\mathrm{SEM}_{\mathrm{hist}} \leftarrow \sqrt{v_{\mathrm{hist}}}\big/\sqrt{N^{\mathrm{eff}}_{k^{(m)}}+\delta_N}\).
        \STATE \(w^{(m)} \leftarrow \exp(-\mathrm{SEM}_{\mathrm{hist}}/T)\).
        \STATE Set
        \[
        \begin{aligned}
        b^{(m)} &\leftarrow w^{(m)}\mu_{\mathrm{hist}}
        + (1-w^{(m)})\mu_{\mathcal{G}}^{(m)}, \\
        s^{(m)} &\leftarrow \sqrt{
        w^{(m)}v_{\mathrm{hist}}
        + (1-w^{(m)})(\sigma_{\mathcal{G}}^{(m)})^2
        }.
        \end{aligned}
        \]
    \ENDIF
    \FOR{each trajectory \(i=1,\dots,G\)}
        \STATE \(A_i^{\mathrm{BV},(m)} \leftarrow (R_i^{(m)}-b^{(m)})/(s^{(m)}+\delta)\).
    \ENDFOR
\ENDFOR
\RETURN \(\{A_i^{\mathrm{BV},(m)}\}\).
\end{algorithmic}
\end{algorithm}

\begin{algorithm}[t]
\caption{\method\ (BV-Blend) Training Loop (Simplified)}
\label{alg:bvblend_training_loop}
\begin{algorithmic}[1]
\STATE \textbf{Initialize} policy parameters \(\theta\), reference policy \(\pi_{\mathrm{ref}}\), and fixed clustering codebook \(\{c_j\}_{j=1}^K\).
\STATE \textbf{Initialize} EMA buffers \(\texttt{m1[K]}, \texttt{m2[K]}, \texttt{n\_eff[K]}\) and flags \(\texttt{seen[K]}\leftarrow\texttt{False}\).
\FOR{iteration \(t=1,2,\dots\)}
    \STATE Set \(\theta_{\mathrm{old}} \leftarrow \theta\). \hfill (behavior snapshot)
    \STATE Sample \(M\) prompts \(\{q^{(m)}\}_{m=1}^M\); for each prompt, sample \(G\) trajectories \(\{\tau_i^{(m)}\}_{i=1}^G\) from \(\pi_{\theta_{\mathrm{old}}}\).
    \STATE Compute verifier rewards \(R_i^{(m)}\) for all trajectories.
    \STATE Compute BV-Blend advantages \(A_i^{\mathrm{BV},(m)}\) using Algorithm~\ref{alg:bvblend_advantage} (pre-update EMA).
    \STATE Update \(\theta\) by optimizing the PPO-style clipped objective using \(A_i^{\mathrm{BV},(m)}\) (and optional KL/entropy terms).
    \STATE \textbf{Post-update:} aggregate per-cluster sufficient statistics \(\{S_{1,\mathcal{B}}(k), S_{2,\mathcal{B}}(k), N_{\mathcal{B}}(k)\}_{k=1}^K\) across workers.
    \STATE \textbf{Post-update:} update EMA buffers for clusters with \(N_{\mathcal{B}}(k)>0\) using Eq.~\eqref{eq:ema_moments}; for previously unseen clusters, initialize using \((N_0,V_{\mathrm{prior}})\) and set \(\texttt{seen[k]}\leftarrow\texttt{True}\).
\ENDFOR
\end{algorithmic}
\end{algorithm}

\subsection{Model and Optimization}

\paragraph{Model Architecture.}
We perform full-parameter fine-tuning on all base models.
We do not use parameter-efficient fine-tuning (PEFT) methods such as LoRA~\citep{hu2022lora}, and we do not add adapters or custom heads; the transformer architecture remains unchanged.

\paragraph{Optimization Objective.}
During each policy update, we optimize a PPO-style clipped surrogate objective~\citep{schulman2017ppo} using the BV-Blend advantages \(A_i^{\mathrm{BV},(m)}\) (Sec.~\ref{sec:method}).
Concretely, we maximize
\begin{align}
\mathcal{J}(\theta)
&=
\mathbb{E}_{m}\!\Bigg[
\frac{1}{G}\sum_{i=1}^{G}
\mathbb{E}_{t\sim\tau^{(m)}_i}\!\Big[
\min\Big(
r^{(m)}_{i,t}(\theta)\,A^{\mathrm{BV},(m)}_i,\nonumber\\
&\qquad\qquad\qquad\qquad\qquad
\tilde r^{(m)}_{i,t}(\theta)\,A^{\mathrm{BV},(m)}_i
\Big)
\Big]
\Bigg] \nonumber\\
&\quad
-\beta\;\mathbb{E}_{m,i}\!\Big[
\mathbb{E}_{t\sim\tau^{(m)}_i}\!\big[
\mathrm{KL}\!\Bigl(\pi_\theta(\cdot\mid s_t)\,\Big\|\,\nonumber\\
&\qquad\qquad\qquad\qquad\qquad
\pi_{\mathrm{ref}}(\cdot\mid s_t)\Bigr)
\big]
\Big] \nonumber\\
&\quad
+\lambda_{\mathrm{ent}}\;\mathbb{E}_{m,i}\!\Big[
\mathbb{E}_{t\sim\tau^{(m)}_i}\!\big[
\mathcal{H}\!\left(\pi_\theta(\cdot\mid s_t)\right)
\big]
\Big],
\label{eq:appendix_objective}
\end{align}
where \(r^{(m)}_{i,t}(\theta)\) and \(\tilde r^{(m)}_{i,t}(\theta)\) are defined in Eq.~\eqref{eq:ratio}, and \(\mathbb{E}_{t\sim\tau^{(m)}_i}[\cdot]\) denotes the completion-token mean with the same masking convention as Eq.~\eqref{eq:masked_mean}.
We implement training by minimizing the loss \(\mathcal{L}(\theta)=-\mathcal{J}(\theta)\).
Unless otherwise specified, we set \(\lambda_{\mathrm{ent}}=0.01\) and use a fixed reference policy \(\pi_{\mathrm{ref}}\); in our main comparisons, we set \(\beta=0\).

\subsection{Prompting Strategy}

We format inputs using each model's tokenizer-provided chat template via
\texttt{tokenizer.apply\_chat\_template(\dots, add\_generation\_prompt=True)} for both training and evaluation, which improves reproducibility and avoids formatting mismatches across backbones~\citep{hf_chat_templating}.
Unless otherwise stated, we use a generic assistant system message and provide the problem statement as the user message.
We do not require additional task-specific instructions for Qwen-based math models; they typically produce multi-step solutions by default.
For Llama-3.1-Instruct in the robustness experiments, we optionally prepend a short reasoning cue (e.g., ``Let's think step by step.'') to encourage step-by-step solutions~\citep{wei2022chain}.

\begin{tcolorbox}[
    title=Rendered prompt for Qwen-based chat models (schematic),
    colback=blue!5!white, colframe=blue!75!black, fonttitle=\bfseries,
    arc=2mm, boxrule=1pt,
]
\begin{verbatim}
<|im_start|>system
You are a helpful assistant.<|im_end|>
<|im_start|>user
{problem_description}<|im_end|>
<|im_start|>assistant
\end{verbatim}
\end{tcolorbox}

\begin{tcolorbox}[
    title=Rendered prompt for Llama-3.1-Instruct (schematic),
    colback=red!5!white, colframe=red!75!black, fonttitle=\bfseries,
    arc=2mm, boxrule=1pt,
]
\textbf{User:} \{problem\_description\} \\
\textbf{Assistant:} Let's think step by step.
\end{tcolorbox}

\paragraph{Notation.}
For convenience, Table~\ref{tab:symbols} summarizes the key symbols used throughout the paper, consistent with Sec.~\ref{sec:method}.
We group notation into (i) trajectories and PPO-style optimization, (ii) prompt-local (GRPO-style) reward statistics, and (iii) BV-Blend’s semantic clustering, historical EMA moments, and SEM-based confidence weighting.
Unless otherwise stated, prompts are indexed by \(m\), trajectories by \(i\), completion-token positions by \(t\), clusters by \(k\), and codebook entries by \(j\).
All reward/statistics terms used to form advantages (e.g., \(\mu_{\mathcal{G}}^{(m)}\), \(\sigma_{\mathcal{G}}^{(m)}\), and statistics derived from historical EMA moments) are treated as stop-gradient constants.

\begin{table*}[t]
\centering
\renewcommand{\arraystretch}{1.22}
\caption{Summary of key symbols (aligned with Sec.~\ref{sec:method}).}
\label{tab:symbols}
\begin{tabular}{@{}ll@{}}
\toprule
\textbf{Symbol} & \textbf{Definition} \\ \midrule

\multicolumn{2}{c}{\textit{Trajectories and PPO-style optimization}} \\
\(\pi_{\theta}\), \(\pi_{\theta_{\mathrm{old}}}\) & Current policy and behavior (rollout) policy snapshot. \\
\(\pi_{\mathrm{ref}}\) & Fixed reference policy used for optional KL regularization. \\
\(M\) & Number of prompts in a training batch. \\
\(q^{(m)}\) & The \(m\)-th prompt in a training batch. \\
\(G\) & Number of rollouts (trajectories) per prompt. \\
\(\mathcal{G}(q^{(m)})\) & Rollout group for prompt \(q^{(m)}\): \(\{\tau_i^{(m)}\}_{i=1}^{G}\). \\
\(\tau_i^{(m)}\) & The \(i\)-th trajectory sampled for prompt \(q^{(m)}\). \\
\(T_i^{(m)}\) & Number of generated tokens in trajectory \(\tau_i^{(m)}\). \\
\(R_i^{(m)}\) & Trajectory-level verifier reward for \(\tau_i^{(m)}\). \\
\(r^{(m)}_{i,t}(\theta)\), \(\tilde r^{(m)}_{i,t}(\theta)\) & Importance ratio and its clipped version (Eq.~\eqref{eq:ratio}). \\
\(\epsilon\) & PPO clipping parameter. \\
\(m^{(m)}_{i,t}\) & Completion-token mask used in token means (Eq.~\eqref{eq:masked_mean}). \\
\(\mathbb{E}_{t\sim\tau}[\,\cdot\,]\) & Generic notation for the masked mean over completion tokens (Eq.~\eqref{eq:masked_mean}). \\

\midrule
\multicolumn{2}{c}{\textit{Prompt-local (GRPO-style) statistics}} \\
\(\mu_{\mathcal{G}}^{(m)}\), \(\sigma_{\mathcal{G}}^{(m)}\) & Prompt-local mean/std of \(\{R_i^{(m)}\}_{i=1}^G\). \\
\(A_i^{\mathrm{GRPO},(m)}\) & GRPO-style standardized advantage (Eq.~\eqref{eq:grpo_adv}). \\
\(\delta\) & Small constant for numerical stability in advantage computation. \\

\midrule
\multicolumn{2}{c}{\textit{BV-Blend: clustering and historical EMA moments}} \\
\(E(\cdot)\) & Frozen prompt encoder. \\
\(K\), \(\{c_j\}_{j=1}^K\) & Number of clusters and fixed \(K\)-means codebook. \\
\(k^{(m)}\) & Cluster assignment for prompt \(q^{(m)}\) (Eq.~\eqref{eq:cluster}). \\
\(m_{1}(k)\), \(m_{2}(k)\) & EMA mean and EMA second raw moment for cluster \(k\). \\
\(N_k^{\mathrm{eff}}\) & EMA effective mass for cluster \(k\). \\
\(\mu_{\mathrm{hist}}(k)\), \(v_{\mathrm{hist}}(k)\), \(\sigma_{\mathrm{hist}}(k)\) & Historical mean/variance/std derived from EMA moments (Eq.~\eqref{eq:hist_var}). \\
\(\gamma\) & EMA update rate (Eq.~\eqref{eq:ema_moments}). \\
\(\mathcal{B}\) & Current rollout batch. \\
\(S_{1,\mathcal{B}}(k)\), \(S_{2,\mathcal{B}}(k)\), \(N_{\mathcal{B}}(k)\) & Per-cluster sufficient statistics in batch \(\mathcal{B}\). \\
\(N_0\), \(V_{\mathrm{prior}}\) & Cold-start initialization hyperparameters for new clusters. \\

\midrule
\multicolumn{2}{c}{\textit{BV-Blend: confidence and advantage}} \\
\(\mathrm{SEM}_{\mathrm{hist}}(k)\) & SEM-style uncertainty proxy (Eq.~\eqref{eq:sem}). \\
\(\delta_N\) & Small constant in SEM computation (Eq.~\eqref{eq:sem}). \\
\(T\) & Temperature controlling sensitivity in \(w_k\) (Eq.~\eqref{eq:weight}). \\
\(w_k\) & Historical confidence weight computed from pre-update EMA statistics. \\
\(w^{(m)}\) & Confidence weight for prompt \(m\): \(w^{(m)} = w_{k^{(m)}}\). \\
\(b^{(m)}\), \(s^{(m)}\) & Blended baseline and scale for prompt \(m\) (Eq.~\eqref{eq:blend_stats}). \\
\(A_i^{\mathrm{BV},(m)}\) & BV-Blend advantage (Eq.~\eqref{eq:bv_adv}). \\

\bottomrule
\end{tabular}
\end{table*}

\section{Extended Experimental Results}
\label{subsec:extended_results}

This section presents supplementary quantitative results and ablations that complement the main paper.
Throughout, we compare against the on-policy GRPO baseline under the same PPO-style clipped objective and vary only the advantage estimator.

\subsection{Further Ablation Studies}

\paragraph{Confidence-to-weight mapping.}
We ablate the functional form that maps historical uncertainty to the confidence weight \(w_k\).
All variants compute the same SEM-style uncertainty \(\mathrm{SEM}_{\mathrm{hist}}(k)\) from \emph{pre-update} EMA statistics (Eq.~\eqref{eq:sem}) and use the same baseline/variance blending (Eq.~\eqref{eq:blend_stats}); they differ only in the mapping \(w_k=g(u_k)\), where \(u_k \triangleq \mathrm{SEM}_{\mathrm{hist}}(k)/T\).
Table~\ref{tab:ablation_w} reports results on AIME 2024 (avg@32).
Among the tested monotone mappings, the exponential form \(w_k=\exp(-u_k)\) performs best, which is consistent with the use of a smooth confidence schedule rather than a hard threshold.

\begin{table*}[h!]
\centering
\caption{Ablation on the confidence mapping \(w_k=g(u_k)\) on AIME 2024 (avg@32), where \(u_k=\mathrm{SEM}_{\mathrm{hist}}(k)/T\).}
\label{tab:ablation_w}
\begin{tabular}{l c}
\toprule
\textbf{Confidence Mapping \(g(u)\)} & \textbf{AIME 2024 (\%)} \\
\midrule
\(1 / (1 + u)\) & 32.5 \\
Linear decay: \(\max(0, 1 - u)\) & 30.1 \\
\textbf{Exponential decay: \(\exp(-u)\) (ours)} & \textbf{34.2} \\
\bottomrule
\end{tabular}
\end{table*}

\begin{figure*}[t]
    \centering
    \includegraphics[width=0.93\textwidth]{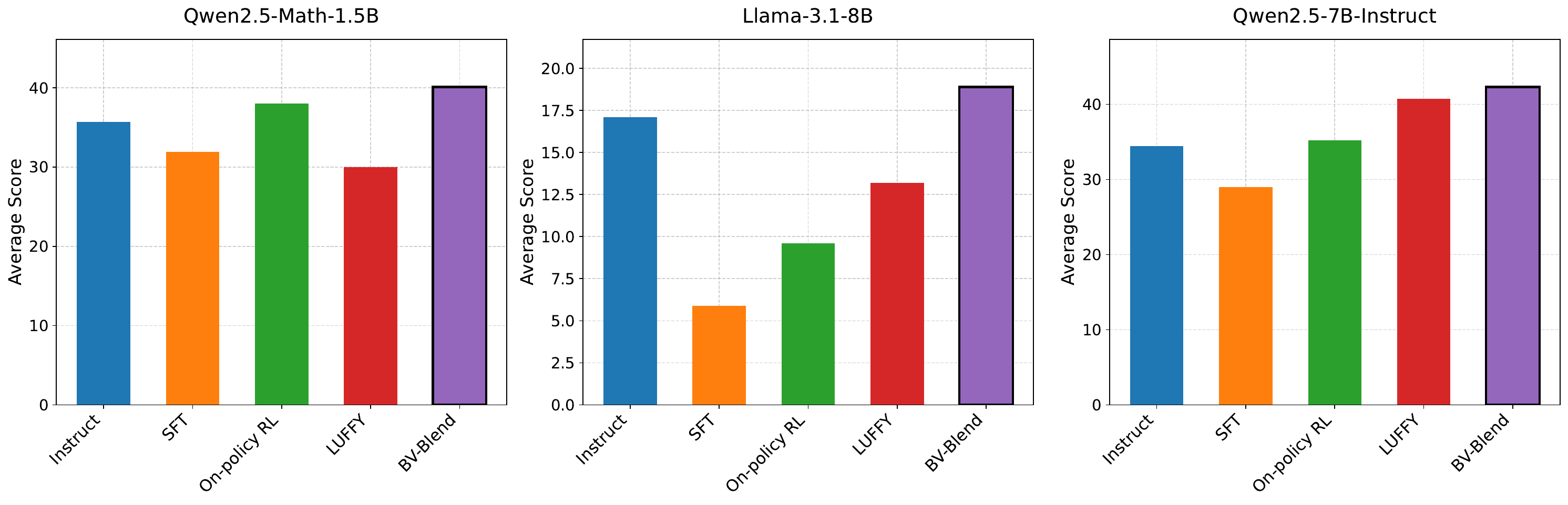}
    \caption{\textbf{Cross-backbone robustness.} Performance of \method\ relative to baselines across diverse model backbones under the same RLVR setup.}
    \label{fig:model_robustness}
\end{figure*}

\subsection{Robustness Across Diverse Models}
\label{sec:robustness_models}

To assess robustness beyond our primary Qwen2.5-Math-7B setting, we evaluate \method on three additional backbones: Qwen2.5-Math-1.5B, Qwen2.5-7B-Instruct, and LLaMA-3.1-8B.
As shown in Fig.~\ref{fig:model_robustness}, \method improves over the best baseline included in our comparison on all three settings: +2.1 points on Qwen2.5-Math-1.5B and +1.6 points on Qwen2.5-7B-Instruct.
The most challenging case is LLaMA-3.1-8B, where standard prompt-local estimators exhibit pronounced instability under our RLVR setup and can substantially degrade final performance; in this regime, \method reaches 19.9, improving by 2.8 points over the best-performing baseline.
While absolute performance on LLaMA-3.1-8B remains modest under this setup, the relative trend is still informative: \method reduces degradation and remains more stable than prompt-local normalization when within-group reward dispersion is small.
Overall, these results suggest that \method is a practical replacement for prompt-local advantage normalization in on-policy RLVR across heterogeneous backbones and training conditions.

\subsection{Proof that GRPO Advantages Vanish Under Uniform Prompt-Group Rewards}
\label{subsubsec:proof_info_collapse}

We show that the GRPO-style prompt-local standardized advantage is exactly zero when all rollouts within a prompt group receive the same reward.

\begin{theorem}[Prompt-local collapse in GRPO]
\label{thm:grpo_collapse}
Fix a prompt \(q^{(m)}\) with rollout set \(\mathcal{G}(q^{(m)})=\{\tau_i^{(m)}\}_{i=1}^{G}\).
If \(R_i^{(m)}=c\) for all \(i\in\{1,\dots,G\}\) and some constant \(c\), then
\(A_i^{\mathrm{GRPO},(m)}=0\) for all \(i\).
\end{theorem}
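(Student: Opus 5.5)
The plan is a direct computation from the definition of the GRPO advantage in Eq.~\eqref{eq:grpo_adv}. The point is that the numerator vanishes identically while the denominator stays bounded away from zero. We therefore never divide by a vanishing quantity, and no limiting argument is needed.

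First, compute the prompt-local mean. Since \(R_i^{(m)}=c\) for every \(i\), we get \(\mu_{\mathcal{G}}^{(m)}=\frac{1}{G}\sum_{i=1}^{G}R_i^{(m)}=c\). Hence the numerator \(R_i^{(m)}-\mu_{\mathcal{G}}^{(m)}=c-c=0\) for every \(i\).

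Second, handle the denominator. For \(G\ge 2\), the standard deviation is \(\sigma_{\mathcal{G}}^{(m)}=\sqrt{\frac{1}{G'}\sum_i(R_i^{(m)}-c)^2}=0\). This holds whether the normalizer \(G'\) is \(G\) or \(G-1\). For \(G<2\), the convention stated after Eq.~\eqref{eq:grpo_adv} sets \(\sigma_{\mathcal{G}}^{(m)}=0\) directly. In either case the denominator equals \(\sigma_{\mathcal{G}}^{(m)}+\delta=\delta>0\), so \(A_i^{\mathrm{GRPO},(m)}=0/\delta=0\).

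The only step needing care is the well-definedness of the quotient, and the strict positivity of \(\delta\) resolves it. I will also remark that, since the advantage is a stop-gradient constant, every clipped term in Eq.~\eqref{eq:grpo_obj} for this prompt is then \(\min(r\cdot 0,\tilde r\cdot 0)=0\). The group therefore contributes no gradient, which is the intended interpretation of the collapse.
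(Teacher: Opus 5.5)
Your proposal is correct and matches the paper's proof: both observe that \(\mu_{\mathcal{G}}^{(m)}=c\) and \(\sigma_{\mathcal{G}}^{(m)}=0\), so \(\delta>0\) gives \(A_i^{\mathrm{GRPO},(m)}=(c-c)/(0+\delta)=0\). Your treatment of the \(G<2\) convention and of the \(G\) versus \(G-1\) normalizer just makes explicit what the paper's proof leaves implicit.
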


\begin{proof}
By Eq.~\eqref{eq:grpo_adv},
\[
A_i^{\mathrm{GRPO},(m)}=\frac{R_i^{(m)}-\mu_{\mathcal{G}}^{(m)}}{\sigma_{\mathcal{G}}^{(m)}+\delta},
\]
where \(\mu_{\mathcal{G}}^{(m)}\) and \(\sigma_{\mathcal{G}}^{(m)}\) are the mean and standard deviation of \(\{R_i^{(m)}\}_{i=1}^{G}\), and \(\delta>0\).
If \(R_i^{(m)}=c\) for all \(i\), then \(\mu_{\mathcal{G}}^{(m)}=c\) and \(\sigma_{\mathcal{G}}^{(m)}=0\).
Substituting yields
\[
A_i^{\mathrm{GRPO},(m)}=\frac{c-c}{0+\delta}=0,
\]
for all \(i\).
\end{proof}

\subsection{Difficulty Stratification and Subset Definition}
\label{app:difficulty_stratification}

For the analysis in Fig.~3, we assign each prompt to one of four fixed difficulty buckets (Easy, Medium, Hard, Hardest) using a pre-RL difficulty estimate computed before policy optimization. The same bucket assignment is shared across all compared methods and kept fixed throughout training and evaluation. We further partition the evaluation prompts into \textit{Standard} and \textit{Complex} subsets using the same precomputed protocol, and report checkpoint-wise verifier accuracy and average completion length separately for the two subsets. This analysis is intended only to provide a consistent difficulty-stratified comparison between \method and GRPO; it does not affect training, sampling, or model selection.

\section{Bias--Variance Analysis of the BV-Blend Advantage Estimator}
\label{subsubsec:bias_variance}

This section analyzes the statistical behavior of BV-Blend's advantage construction.
Rather than mixing two advantage estimators directly, BV-Blend blends a \emph{baseline} and \emph{variance statistics} (which determine the scale) and then performs a single standardization:
\begin{equation}
A_i^{\mathrm{BV},(m)} \;=\; \frac{R_i^{(m)} - b^{(m)}}{s^{(m)}+\delta},
\label{eq:bv_adv_appendix}
\end{equation}
where \((b^{(m)}, s^{(m)})\) are defined by Eq.~\eqref{eq:blend_stats}.
Throughout, we treat the confidence weight \(w_k\) as fixed within the current batch because it is computed from \emph{pre-update} EMA moments (Sec.~\ref{sec:uncertainty}) and gradients are stopped through all reward statistics.

\paragraph{Conditional mean and variance (given \(b^{(m)},s^{(m)}\)).}
Conditioned on the (stop-gradient) statistics \((b^{(m)}, s^{(m)})\), BV-Blend is an affine transform of the trajectory reward:
\begin{align}
\mathbb{E}\!\left[A_i^{\mathrm{BV},(m)} \mid b^{(m)}, s^{(m)}\right]
&= \frac{\mathbb{E}[R_i^{(m)}]- b^{(m)}}{s^{(m)}+\delta}, \label{eq:cond_mean}\\
\mathrm{Var}\!\left(A_i^{\mathrm{BV},(m)} \mid b^{(m)}, s^{(m)}\right)
&= \frac{\mathrm{Var}(R_i^{(m)})}{(s^{(m)}+\delta)^2}. \label{eq:cond_var}
\end{align}
Thus, \(s^{(m)}\) directly controls the magnitude of the learning signal entering the PPO-style surrogate objective.

\paragraph{Why BV-Blend can mitigate prompt-local collapse.}
Under GRPO-style prompt-local normalization, \(s^{(m)}=\sigma_{\mathcal{G}}^{(m)}\).
If rewards in a prompt group are constant (all-correct or all-incorrect), then \(\sigma_{\mathcal{G}}^{(m)}=0\) and the standardized advantages are exactly zero (Theorem~\ref{thm:grpo_collapse}), eliminating the learning signal for that group.

In BV-Blend, the blended scale satisfies
\[
(s^{(m)})^2
= w^{(m)} v_{\mathrm{hist}}(k^{(m)}) + (1-w^{(m)})(\sigma_{\mathcal{G}}^{(m)})^2.
\]
Therefore, when \(\sigma_{\mathcal{G}}^{(m)}\approx 0\), the scale remains non-degenerate whenever \(w^{(m)}>0\) and \(v_{\mathrm{hist}}(k^{(m)})>0\).
If a prompt group has constant reward \(R_i^{(m)}=c\) for all \(i\), then
\[
\begin{aligned}
b^{(m)} &= (1-w^{(m)})c + w^{(m)}\mu_{\mathrm{hist}}(k^{(m)}), \\
s^{(m)} &= \sqrt{w^{(m)}v_{\mathrm{hist}}(k^{(m)})},
\end{aligned}
\]
which yields
\begin{equation}
A_i^{\mathrm{BV},(m)}
=
\frac{w^{(m)}\bigl(c-\mu_{\mathrm{hist}}(k^{(m)})\bigr)}{\sqrt{w^{(m)}v_{\mathrm{hist}}(k^{(m)})}+\delta}.
\label{eq:nondegenerate_constant_reward}
\end{equation}
Thus, BV-Blend can retain a non-zero advantage even when prompt-local dispersion vanishes, provided \(w^{(m)}>0\) and \(v_{\mathrm{hist}}(k^{(m)})>0\); conversely, if \(w^{(m)}=0\) (e.g., an unseen cluster) or \(v_{\mathrm{hist}}(k^{(m)})=0\), the advantage may still collapse for that batch.

\paragraph{A bias--variance viewpoint via moment shrinkage.}
BV-Blend can be viewed as constructing \emph{shrunk} estimators of the reward mean and variance (within a semantic cluster), and then standardizing once.
Let \(\mu_\star(k)\) and \(v_\star(k)\) denote the (hypothetical) population moments for cluster \(k\) under the current policy, and let \(\mu_{\mathcal{G}}^{(m)}\) and \((\sigma_{\mathcal{G}}^{(m)})^2\) be the prompt-local sample moments computed from \(G\) rollouts.
BV-Blend uses
\begin{align}
b^{(m)} &= w^{(m)} \mu_{\mathrm{hist}}(k^{(m)}) + (1-w^{(m)})\mu_{\mathcal{G}}^{(m)}, \label{eq:b_shrink}\\
(s^{(m)})^2 &= w^{(m)} v_{\mathrm{hist}}(k^{(m)}) + (1-w^{(m)})(\sigma_{\mathcal{G}}^{(m)})^2, \label{eq:s2_shrink}
\end{align}
i.e., convex combinations of a low-variance historical estimate (aggregating many past samples) and a high-variance prompt-local estimate (based on \(G\) rollouts).
For intuition, under the approximation that (i) \(w^{(m)}\) is fixed within the batch and (ii) the prompt-local moments are approximately unbiased for \((\mu_\star,v_\star)\) up to standard finite-sample effects, the bias of \(b^{(m)}\) inherits the historical bias scaled by \(w^{(m)}\):
\begin{equation}
\mathrm{Bias}\!\left(b^{(m)}\right) \approx w^{(m)}\,\mathrm{Bias}\!\left(\mu_{\mathrm{hist}}(k^{(m)})\right).
\label{eq:bias_mean}
\end{equation}
Similarly, the variance of \(b^{(m)}\) is reduced relative to the prompt-local estimator when the historical estimate is substantially more certain:
\begin{equation}
\begin{aligned}
\mathrm{Var}\!\left(b^{(m)}\right)
&\approx (1-w^{(m)})^2 \mathrm{Var}\!\left(\mu_{\mathcal{G}}^{(m)}\right) \\
&\quad + (w^{(m)})^2 \mathrm{Var}\!\left(\mu_{\mathrm{hist}}(k^{(m)})\right),
\end{aligned}
\label{eq:var_mean}
\end{equation}
where we typically expect the cross-covariance to be small because \(\mu_{\mathrm{hist}}\) is computed from past batches.
These relations are intended as explanatory approximations; they ignore estimation error in \(w^{(m)}\) and \(s^{(m)}\), as well as the additional nonlinearity introduced by the square root and the final normalization.

\paragraph{Why SEM-based weighting is a useful proxy.}
BV-Blend sets \(w_k\) as a monotone function of the SEM-style uncertainty proxy
\(\mathrm{SEM}_{\mathrm{hist}}(k)=\sigma_{\mathrm{hist}}(k)/\sqrt{N_k^{\mathrm{eff}}+\delta_N}\) (Eq.~\eqref{eq:sem}),
matching the classical scaling of the standard error of the sample mean with the standard deviation and sample size.
This makes \(w_k\) large when historical estimates are both low-variance and supported by a large effective count, and small otherwise, thereby adapting the shrinkage strength to estimated uncertainty.

\subsection{Discussion of Key Assumptions}
\label{subsubsec:bias_variance_assumptions}

The analysis above is intended as an explanatory approximation and relies on standard assumptions:
\begin{itemize}
    \item \textbf{Stop-gradient statistics.} We treat \((b^{(m)},s^{(m)},w_k)\) as fixed w.r.t.\ policy gradients, consistent with our implementation (Sec.~\ref{sec:method}).
    \item \textbf{Approximate independence across batches.} Historical EMA moments are computed from past batches, so their covariance with current prompt-local moments is often small, though not exactly zero.
    \item \textbf{Limited non-stationarity.} Historical moments are most informative when the reward distribution within a semantic cluster does not shift arbitrarily fast; the SEM-based weight is designed to reduce reliance on history when uncertainty is high.
    \item \textbf{No strict unbiasedness claim.} Because BV-Blend uses normalization and heuristic shrinkage (and PPO further clips the objective), we do not claim that the resulting standardized quantity is a strictly unbiased estimator of the true MDP advantage.
\end{itemize}

\section{Reproducibility Details}
\label{subsec:computational_resources}

All experiments were conducted on a single multi-GPU node. We report the main hardware and software stack to facilitate reproduction.

\paragraph{Hardware.}
Experiments were conducted on one node equipped with 8 NVIDIA RTX PRO 6000 GPUs (96\,GB memory per GPU).
Training uses PyTorch Fully Sharded Data Parallel (FSDP) across all 8 GPUs.

\paragraph{Software.}
Our implementation is based on PyTorch~2.4.0 and the Hugging Face Transformers/Accelerate stack. For generation, we use vLLM~(v0.6.3) with CUDA~12.1. We additionally use FlashAttention~(v2.7.3), TensorDict~(v0.5.0), and the \texttt{verl} library for RL orchestration. All experiments are conducted in a containerized CUDA~12.1 environment.

\end{document}